\newcommand{\A}{\mathcal{A}}
\newcommand{\Cand}{\mathsf{Candidate}}
\newcommand{\D}{\mathcal{D}}
\newcommand{\eps}{\epsilon}
\newcommand{\err}{\mathrm{err}}
\newcommand{\F}{\mathcal{F}}
\newcommand{\G}{\mathcal{G}}
\newcommand{\Ind}[1]{\mathbb{I}\left[#1\right]}
\newcommand{\N}{\mathbb{N}}
\renewcommand{\O}{\mathcal{O}}
\newcommand{\Test}{\mathsf{Test}}
\newcommand{\X}{\mathcal{X}}
\newtheorem{theorem}{Theorem}[section]
\newtheorem{lemma}[theorem]{Lemma}
\newtheorem{claim}[theorem]{Claim}
\newtheorem{conjecture}[theorem]{Conjecture}
\newtheorem{proposition}[theorem]{Proposition}
\newtheorem{definition}[theorem]{Definition}
\begin{document}

\title{Do Outliers Ruin Collaboration?}
\author{Mingda Qiao\\Institute for Interdisciplinary Information Sciences (IIIS)\\Tsinghua University}
\date{}

\maketitle

\begin{abstract}
We consider the problem of learning a binary classifier from $n$ different data sources, among which at most an $\eta$ fraction are adversarial. The \emph{overhead} is defined as the ratio between the sample complexity of learning in this setting and that of learning the same hypothesis class on a single data distribution. We present an algorithm that achieves an $O(\eta n + \ln n)$ overhead, which is proved to be worst-case optimal. We also discuss the potential challenges to the design of a computationally efficient learning algorithm with a small overhead.
\end{abstract}

\section{Introduction}
Consider the following real-world scenario: we would like to train a speech recognition model based on labeled examples collected from different users. For this particular application, a high \emph{average} accuracy over all users is far from satisfactory: a model that is correct on $99.9\%$ of the data may still go seriously wrong for a small yet non-negligible $0.1\%$ fraction of the users. Instead, a more desirable objective would be finding personalized speech recognition solutions that are accurate for \emph{every single user}.

There are two major challenges to achieving this goal, the first being user heterogeneity: a model trained exclusively for users with a particular accent may fail miserably for users from another region. This challenge hints that a successful learning algorithm should be adaptive: more samples need to be collected from users with atypical data distributions. Equally crucial is that a small fraction of the users are malicious (e.g., they are controlled by a competing corporation); these users intend to mislead the speech recognition model into generating inaccurate or even ludicrous outputs.

Motivated by these practical concerns, we propose the \emph{Robust Collaborative Learning} model and study from a theoretical perspective the complexity of learning in the presence of untrusted collaborators. In our model, a learning algorithm interacts with $n$ different users, each associated with a data distribution $\D_i$. As mentioned above, a successful learning algorithm should, ideally, find personalized classifiers $f_1, f_2, \ldots, f_n$ for different distributions, such that \[\err_{\D_i}(f_i) \triangleq \Pr_{x \sim \D_i}[f_i(x) \ne f^*(x)] < \eps\] holds for every $i \in [n]$, where $f^*(x)$ denotes the true label of sample $x$. Further complicating the situation is that the algorithm can only interact with the data distributions via the users, each of which is either \emph{truthful} or \emph{adversarial}. A truthful user always provides the learning algorithm with independent samples drawn from his distribution together with the correct labels, whereas the labeled samples collected from adversarial users are arbitrary.

In the presence of malicious users, it is clearly impossible to learn an accurate classifier for every single distribution: an adversary may choose to provide no information about his data distribution. Therefore, a more realistic objective is to satisfy all the truthful users, i.e., to learn $n$ classifiers $f_1, f_2, \ldots, f_n$ such that $\err_{\D_i}(f_i) < \eps$ holds for every truthful user $i$.

Na\"ively, one could ignore the prior knowledge that samples from truthful users are labeled by the same function, and run $n$ independent copies of the same learning algorithm for the $n$ users. This straightforward approach clearly needs at least $n$ times as many samples as that required by learning on a single data distribution. Following the terminology of~\citet{blum2017collaborative}, we say that this na\"ive algorithm leads to an $\Omega(n)$ sample complexity \emph{overhead}. The notion of overhead measures the extent to which learning benefits from the collaboration and sharing of information among different parties. \citet{blum2017collaborative} proposed a learning algorithm that achieves an $O(\ln n)$ overhead for the case that all users are truthful, i.e., $\eta = 0$. We are then interested in answering the following natural question: can we still achieve a sublinear overhead for the case that $\eta > 0$, at least when $\eta$ is sufficiently small? In other words, \emph{do adversaries ruin the efficiency of collaboration?}

\subsection{Model and Preliminaries}
Similar to the classic \emph{Probably Approximately Correct (PAC) learning} framework due to \citet{valiant1984theory}, we consider the binary classification problem on a set $\X$. The hypothesis class $\F$ is a collection of binary functions on $\X$ with VC-dimension $d$. The elements in $\X$ are labeled by an unknown target function $f^* \in \F$.\footnote{This is known as the \emph{realizable} setting of PAC learning.}

Suppose that $\D$ is a probability distribution on set $\X$. Let $\O_{\F}$ denote the oracle that, given a set $S = \{(x_i, y_i)\}$ of labeled examples, either returns a classifier $f \in \F$ that is consistent with the examples (i.e., $f(x_i) = y_i$ for every $(x_i, y_i) \in S$) or returns $\bot$ if $\F$ contains no such consistent classifiers. A classic result in PAC learning states that if \[m = \Theta\left(\frac{d\ln(1 / \eps) + \ln(1 / \delta)}{\eps}\right)\] independent labeled samples $S = \{(x_i, f^*(x_i)): i \in [m]\}$ are drawn from $\D$, with probability at least $1 - \delta$, inequality $\err_{\D}(f) < \eps$ holds for every possible output $f = \O_{\F}\left(S\right)$~\cite{blumer1989learnability}.

In the Robust Collaborative Learning setting, we consider $n$ different data distributions $\D_1, \D_2, \ldots, \D_n$ supported on $\X$. A learning algorithm interacts with these distributions via $n$ \emph{user oracles} $\O_1, \O_2, \ldots, \O_n$, each of which operates in one of two different modes: \emph{truthful} or \emph{adversarial}. Upon each call to a truthful oracle $\O_i$, a sample $x$ is drawn from distribution $\D_i$ and the labeled sample $(x, f^*(x))$ is returned. On the other hand, an adversarial oracle $\O_i$ may output an arbitrary pair in $\X \times \{0, 1\}$ each time.\footnote{Our results hold even if the adversarial oracles are allowed to collude and they know the samples previously drawn by truthful oracles.}

We define $(\eps, \delta, \eta)$-learning in the Robust Collaborative Learning model as the task of learning an $\eps$-accurate classifier for each truthful user with probability $1 - \delta$, under the assumption that at most an $\eta$ fraction of the oracles are adversarial.
\begin{definition}[$(\eps, \delta, \eta)$-learning]
	Algorithm $\A$ is an $(\eps, \delta, \eta)$-learning algorithm if $\A$, given a concept class $\F$ and access to $n$ user oracles $\O_1, \O_2, \ldots, \O_n$ among which at most $\eta n$ oracles are adversarial, outputs functions $f_1, f_2, \ldots, f_n: \X \to \{0, 1\}$, such that with probability at least $1 - \delta$, $\err_{\D_i}(f_i) < \eps$ holds simultaneously for every truthful oracle $\O_i$.
\end{definition}

We also formally define the sample complexity of $(\eps, \delta, \eta)$-learning.
\begin{definition}[Sample Complexity]
	Let $M_{\A}(\F, \{\O_i\})$ denote the expected number of times that algorithm $\A$ calls oracles $\O_1, \O_2, \ldots, \O_n$ in total, when it runs on hypothesis class $\F$ and user oracles $\{\O_i\}$. The sample complexity of $(\eps, \delta, \eta)$-learning a concept class with VC-dimension $d$ from $n$ users is defined as:
    \[m_{n, d}(\eps, \delta, \eta) \triangleq \inf_{\A}\sup_{\F, \{\O_i\}}M_{\A}\left(\F, \{\O_i\}\right).\]
Here the infimum is over all $(\eps, \delta, \eta)$-learning algorithms $\A$. The supremum is taken over all hypothesis classes $\F$ with VC-dimension $d$ and user oracles $\O_1, \O_2, \ldots, \O_n$, among which at most an $\eta$ fraction are adversarial. 
\end{definition}

The \emph{overhead} of Robust Collaborative Learning is defined as the ratio between the sample complexity $m_{n, d}(\eps, \delta, \eta)$ and its counterpart in the classic PAC learning setting, $m_{1, d}(\eps, \delta, 0)$. To simplify the notations and restrict our attention to the dependence of overhead on parameters $n$, $d$ and $\eta$, we assume that $\eps=\delta=0.1$ in our definition of overhead.\footnote{This definition only changes by a constant factor when $0.1$ is replaced by other sufficiently small constants.}

\begin{definition}[Overhead]
For $n, d \in \N$ and $\eta \in [0, 1]$, the sample complexity overhead of Robust Collaborative Learning is defined as \[o(n, d, \eta) \triangleq \frac{m_{n, d}(\eps, \delta, \eta)}{m_{1, d}(\eps, \delta, 0)},\] where $\eps = \delta = 0.1$.
\end{definition}

Following our definition of the overhead, the results in \cite{blum2017collaborative} imply that when all users are truthful (i.e., when $\eta = 0$) and $n = O(d)$, $o(n, d, 0) = O(\ln n)$. They also proved the tightness of this bound in the special case that $n = \Theta(d)$.

\subsection{Our Results}
\paragraph{Information-theoretically, collaboration can be robust.}
In Section~\ref{sec:upper}, we present our main positive result: a learning algorithm that achieves an $O(\eta n + \ln n)$ sample complexity overhead when $n = O(d)$. Our result recovers the $O(\ln n)$ overhead upper bound due to~\citet{blum2017collaborative} for the special case $\eta = 0$. In Section~\ref{sec:lower}, we complement our positive result with a lower bound, which states that an $\Omega(\eta n)$ overhead is inevitable in the worst case. In light of the previous $\Omega(\ln n)$ overhead lower bound for the special case that $n = \Theta(d)$~\cite{blum2017collaborative}, our learning algorithm achieves an optimal overhead when parameters $n$ and $d$ differ by a bounded constant factor.

Our characterization of the sample complexity in Robust Collaborative Learning indicates that efficient cooperation is possible even if a small fraction of arbitrary outliers are present. Moreover, the overhead is largely determined by $\eta n$, the maximum possible number of adversaries. Our results suggest that for practical applications, the learning algorithm could greatly benefit from a relatively clean pool of data sources.

\paragraph{Computationally, outliers may ruin collaboration.}
Our study focuses on the sample complexity of Robust Collaborative Learning, yet also important in practice is the amount of computational power required by the learning task. Indeed, the algorithm that we propose in Section~\ref{sec:upper} is inefficient due to an exhaustive enumeration of the set of truthful users, which takes exponential time. In Section~\ref{sec:discussion}, we provide evidence that hints at a time-sample complexity tradeoff in Robust Collaborative Learning. Informally, we conjecture that any learning algorithm with a sublinear overhead must run in super-polynomial time. In other words, while the presence of adversaries does not seriously increase the sample complexity of learning, it may still ruin the efficiency of collaboration by significantly increasing the computational burden of this learning task. We support our conjecture with known hardness results in computational complexity theory.

\section{Related Work}\label{sec:related}
Most related to our work is the recent \emph{Collaborative PAC Learning} model proposed by~\citet{blum2017collaborative}. They also considered the task of learning the same binary classifier on different data distributions, yet all users are assumed to be truthful in their model. In fact, the Robust Collaborative Learning model reduces to the \emph{personalized setting} of their model when $\eta=0$. Here the word ``personalized'' emphasizes the assumption that each user may receive a specialized classifier tailored to his distribution.

In addition to the personalized setting, they also studied the \emph{centralized setting}, in which all the $n$ users should receive the same classifier. They proved that a poly-logarithmic overhead is still achievable in this more challenging setting. In our Robust Collaborative Learning model, however, centralized learning is in general impossible due to the indistinguishability between truthful and adversarial users. The following simple impossibility result holds for extremely simple concept classes and even when infinitely many samples are available.

\begin{proposition}\label{prop:central}
For any $\eps \in [0, 1)$, $\delta \in \left[0, \frac{1}{2}\right)$ and $\eta \in (0, 1]$, no algorithms $(\eps, \delta, \eta)$-learn any concept class of VC-dimension $d\ge2$, under the restriction that all users should receive the same classifier.
\end{proposition}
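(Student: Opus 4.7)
The plan is a classical two-world indistinguishability argument. Since $\F$ has VC-dimension at least $2$, there is a shattered pair $\{x_1, x_2\} \subseteq \X$, and in particular $\F$ contains functions $f_0, f_1$ with $f_0(x_1)=0$, $f_1(x_1)=1$, and $f_0(x_2)=f_1(x_2)=0$. I take $n$ large enough that $\eta n \geq 1$, so at least one adversarial oracle is permitted.

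I construct two scenarios whose oracle responses are deterministic and coincide exactly. In Scenario~A the target is $f_0$: oracle $\O_1$ is truthful with $\D_1 = \delta_{x_1}$ (always answering $(x_1, 0)$), oracle $\O_2$ is adversarial and always answers $(x_1, 1)$, and each remaining $\O_i$ is truthful with $\D_i = \delta_{x_2}$ (answering $(x_2, 0)$). In Scenario~B the target is $f_1$, the roles of $\O_1$ and $\O_2$ are swapped, and $\O_3, \ldots, \O_n$ are as before --- this is consistent precisely because $f_0$ and $f_1$ agree at the ``padding'' point $x_2$. Each scenario uses exactly one adversarial oracle, so the $\eta n$ budget is respected, yet every query from $\A$ receives the same response in the two worlds; hence $\A$'s output distribution is identical in A and B.

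In the centralized setting $\A$ must output a single $f$. Because $\D_1 = \delta_{x_1}$ in Scenario~A and $\eps < 1$, the requirement $\err_{\D_1}(f) < \eps$ for the truthful $\O_1$ forces $f(x_1) = 0$; the analogous requirement on the truthful $\O_2$ in Scenario~B forces $f(x_1) = 1$. Writing $p \triangleq \Pr[f(x_1) = 0]$ (the same quantity in both scenarios by indistinguishability), the failure probability is at least $1-p$ in A and at least $p$ in B, so the worse of the two is at least $\tfrac{1}{2} > \delta$.

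The only mild obstacle is handling small $\eta$: a bare two-user swap would fit the adversary budget only when $\eta \geq 1/2$, which is exactly why I pad with many truthful ``dummy'' users concentrated at the agreement point $x_2$. VC-dimension at least $2$ is used precisely to supply such a shared agreement point alongside the disagreement point $x_1$; the edge case $\eps = 0$ is vacuous (no $f$ achieves $\err < 0$) and $\delta = 0$ follows directly from the same failure-probability bound.
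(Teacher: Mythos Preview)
Your proof is correct and essentially identical to the paper's: both construct two indistinguishable worlds by placing $\O_1$ and $\O_2$ at a disagreement point $x_1$ (with one adversarial in each world) and padding the remaining oracles at an agreement point, then conclude that the centralized output must satisfy $f(x_1)=0$ in one world and $f(x_1)=1$ in the other, forcing failure probability at least $\tfrac12$. The only differences are cosmetic (index labeling) and your slightly more explicit treatment of the probability $p$ and the edge cases $\eps=0$, $\delta=0$.
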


\begin{proof}[Proof of Proposition~\ref{prop:central}]
	Let $x_0$ and $x_1$ be two different samples that can be shattered by $\F$. Choose $f_0, f_1\in\F$ such that $f_0(x_0) = f_0(x_1) = f_1(x_0) = 0$ and $f_1(x_1) = 1$. Let $n$ be large enough such that $\eta n\ge1$. Construct (degenerate) distributions $\D_1, \D_2, \ldots, \D_n$ such that $\D_1(x_1)=\D_2(x_1)=1$ and $\D_i(x_0)=1$ for each $3\le i\le n$.

	Consider the following two cases:
	\begin{itemize}
	\item The target function is $f_0$. The only adversarial user, $\O_1$, misleads the learning algorithm by outputing the labeled example $(x_1, 1)$.
	\item The target function is $f_1$. The only adversarial user, $\O_2$, misleads the learning algorithm by outputing the labeled example $(x_1, 0)$.
	\end{itemize}

	Note that in both cases, oracles $\O_1$ and $\O_2$ always return $(x_1, 1)$ and $(x_1, 0)$ respectively, while all other oracles return $(x_0, 0)$. Consequently, no algorithms can distinguish these two cases with success probability strictly greater than $\frac{1}{2}$. Thus, any learning algorithm would have a failure probability of at least $\frac{1}{2} > \delta$.
\end{proof}

A related line of research is multi-task learning~\cite{caruana1997multitask,baxter2000model,ben2002theoretical,ben2003exploiting}, which studies the problem of learning multiple related tasks simultaneously with significantly fewer samples. Most work in this direction assumes certain relation (e.g., a transfer function) between the given learning tasks. In contrast to multi-task learning, our work focuses on the problem of learning the same classifier on multiple data distributions, without assuming any similarity between these underlying distributions.

Also relevant to our study is the work on robust statistics, i.e., the study of learning and estimation in the presence of noisy data and arbitrary outliers; see~\citet{lai2016agnostic,charikar2017learning,diakonikolas2016robust,diakonikolas2017being,diakonikolas2018robustly} and the references therein for some recent work in this line of research. Classic problems in this regime include the estimation of the mean and covariance of a high-dimensional distribution, given a dataset consisting of samples drawn from the distribution and a small fraction of arbitrary outliers. Our model differs from this line of research in that we consider a general classification setting, and the learning algorithm is allowed to sample different sources adaptively, instead of learning from a given dataset of fixed size.
\section{An Iterative Learning Algorithm}\label{sec:upper}
In this section, we present an iterative $(\eps, \delta, \eta)$-learning algorithm achieves an $O(\eta n + \ln n)$ overhead when $n = O(d)$. Here $n$ is the number of users, and $d$ denotes the VC-dimension of the hypothesis class $\F$. Since $\F$ can be large and even infinite, we assume that the algorithm access $\F$ via an oracle $\O_{\F}$ that, given a set $S = \{(x_i, y_i)\}$ of labeled examples, either returns a classifier $f \in \F$ such that $f(x_i) = y_i$ holds for each pair $(x_i, y_i) \in S$, or returns $\bot$ if $\F$ does not contain any consistent functions. The algorithm interacts with the underlying data distributions $\D_1, \D_2, \ldots, \D_n$ via $n$ example oracles $\O_1, \O_2, \ldots, \O_n$, among which at most an $\eta$ fraction are adversarial.

\subsection{Algorithm}

Our algorithm is formally described in Algorithms \ref{alg:main}~through~\ref{alg:test}. The main algorithm proceeds in rounds and maintains a set $G_r$ of the indices of \emph{active users} at the beginning of round $r$, i.e., users who have not received an $\eps$-accurate classifier so far. When $\lfloor\eta n\rfloor$, the maximum possible number of adversaries, is below $\frac{|G_r|}{10}$, the algorithm invokes subroutine $\Cand$ to find a \emph{candidate classifier} $\hat f_r$. Then, Algorithm~\ref{alg:main} calls the validation procedure $\Test$ to check whether $\hat f_r$ is accurate for each user $i \in G_r$ (with respect to accuracy threshold $\eps$). If so, the algorithm marks the output for user $i$ as $\hat f_r$; otherwise, user $i$ stays in set $G_{r + 1}$ for the next round. When the proportion of adversaries reaches $\frac{1}{10}$, the algorithm learns for the remaining users independently: for each active user, it draws samples from his oracle and outputs an arbitrary classifier that is consistent with his data.

\begin{algorithm}[H]
	\caption{Iterative Robust Collaborative Learning}
	\label{alg:main}
	\KwIn{Parameters $n$, $d$, $\eps$, $\delta$, $\eta$.}
	\KwOut{Classifiers $f_1, f_2, \ldots, f_n$.}
	$r \gets 1$; $G_1 \gets [n]$\;
	\While{$\lfloor\eta n\rfloor \le \frac{|G_r|}{10}$} {
		$\delta_r \gets \frac{\delta}{5r^2}$\;
		$\hat f_r \gets \Cand(G_r, d, \eps, \delta_r)$\;
		$G_{r + 1} \gets \Test(G_r, \hat f_r, \eps, \delta_r)$\;
		Set $f_i \gets \hat f_r$ for each $i \in G_r \setminus G_{r + 1}$\;
		$r \gets r + 1$\;
	}
	\For{$i \in G_r$} {
		$S_i \gets$ $\Theta\left(\frac{d \ln(1/\eps) + \ln(n/\delta)}{\eps}\right)$ labeled samples from $\O_i$\;
		$f_i \gets \O_{\F}(S_i)$\;
	}
	\Return $f_1, f_2, \ldots, f_n$\;
\end{algorithm}

\begin{algorithm}[H]
	\caption{$\Cand(G, d, \eps, \delta)$}
	\label{alg:cand}
	\KwIn{Index set $G$, parameters $d$, $\eps$ and $\delta$.}
	\KwOut{Candidate classifier $\hat f \in \F$.}
	$M \gets \Theta\left(\frac{d\ln(1/\eps) + \ln\left(2^{|G|}/\delta\right)}{\eps} + |G|\ln\frac{|G|}{\delta}\right)$\;
	\For{$i \in G$} {
		$S_i$ $\gets$ $\frac{4M}{|G|}$ labeled samples from $\O_i$\;
	}
	$\G \gets \left\{H \subseteq G: |H| \ge \frac{9}{10}|G|\right\}$\;
	\For{$H \in \G$} {
		$\hat f_H \gets \O_{\F}(\bigcup_{i \in H}S_i)$\;
		\lIf{$\hat f_H \ne \bot$} {\Return $\hat f_H$}
	}
\end{algorithm}

\begin{algorithm}[H]
	\caption{$\Test(G, \hat f, \eps, \delta)$}
	\label{alg:test}
	\KwIn{Set $G$ of indices, candidate function $\hat f$, parameters $\eps$ and $\delta$.}
	\KwOut{Set $G'$ of surviving indices.}
	\For{$i \in G$} {
		$S_i \gets$ $\Theta\Big(\frac{\ln(|G| / \delta)}{\eps}\Big)$ samples from $\O_i$\;
		$\theta_i \gets \frac{1}{|S_i|}\sum_{(x, y)\in S_i}\Ind{\hat f(x) \ne y}$\;
	}
	\Return $G' = \{i \in G: \theta_i > \frac{3}{4}\eps\}$\;
\end{algorithm}

\subsection{Analysis of Subroutines}
Subroutine $\Cand$ (Algorithm~\ref{alg:cand}) is the key to the sample efficiency of our algorithm, as it enables us to learn a candidate classifier that is accurate simultaneously for a constant fraction of the active users, using only a nearly-linear number of samples (with respect to parameters $|G|$ and $d$). Subroutine $\Test$ (Algorithm~\ref{alg:test}) further checks whether the learned classifier is accurate enough for each active user. This allows us to determine whether a user should remain active in the next iteration. We devote this subsection to the analysis of these two subroutines.

\begin{lemma}\label{lem:half}
Suppose $G \subseteq [n]$ denotes the indices of $|G|$ users, among which at most $\frac{|G|}{10}$ are adversarial. Let $\hat f$ denote the output of $\Cand(G, d, \eps, \delta)$. With probability $1 - \delta$, the following two conditions hold simultaneously for at least $\frac{|G|}{2}$ indices $i \in G$: (1) $\err_{\D_i}(\hat f) \le \frac{\eps}{2}$; (2) oracle $\O_i$ is truthful.
\end{lemma}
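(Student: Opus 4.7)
The plan is to prove a single high-probability event over the random samples $\{S_i\}$ that immediately implies the lemma. Let $T \subseteq G$ denote the (unknown) set of truthful indices; since at most $|G|/10$ oracles are adversarial, $|T| \ge 9|G|/10$, so $T \in \G$ and $f^* \in \F$ is consistent with $\bigcup_{i \in T} S_i$. In particular the oracle call on $H = T$ cannot return $\bot$, so $\Cand$ always returns some classifier $\hat f = \hat f_{H^*}$, and by construction $\hat f$ is consistent with $\bigcup_{i \in H^* \cap T} S_i$.

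The core claim I would establish is: with probability at least $1 - \delta$ over the draws of $\{S_i\}$, for every $H \in \G$ and every $f \in \F$ consistent with $\bigcup_{i \in H \cap T} S_i$, the set $B_H(f) := \{i \in H \cap T : \err_{\D_i}(f) > \eps/2\}$ has cardinality at most $|H \cap T|/5$. Granted this, inclusion--exclusion gives $|H^* \cap T| \ge |H^*| + |T| - |G| \ge 4|G|/5$, so at least $4|H^* \cap T|/5 \ge 16|G|/25 > |G|/2$ indices $i \in H^* \cap T \subseteq G$ simultaneously satisfy $\err_{\D_i}(\hat f) \le \eps/2$ and have truthful $\O_i$, which is exactly the conclusion of the lemma.

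To prove the core claim, I would fix $H \in \G$ and $f \in \F$ and reason about the bad event $|B_H(f)| > |H \cap T|/5 \ge 4|G|/25$. In that event the samples from users in $B_H(f)$ are independent, each of size $4M/|G|$, and each individually traps $f$ with probability at least $\eps/2$, so the probability that $f$ is consistent with $\bigcup_{i \in B_H(f)} S_i$ is at most $(1 - \eps/2)^{|B_H(f)| \cdot 4M/|G|} \le e^{-\Omega(M \eps)}$. I would then apply Sauer's lemma to collapse $\F$ to its at most $(4eM/d)^d$ distinct labelings on the $4M$ drawn examples, and union-bound over the at most $2^{|G|}$ sets in $\G$. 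The choice $M = \Theta\bigl((d \ln(1/\eps) + \ln(2^{|G|}/\delta))/\eps + |G| \ln(|G|/\delta)\bigr)$ is calibrated so that this total failure probability is at most $\delta$: the first summand absorbs the VC growth and the $2^{|G|}$ union bound, while the second summand ensures each user contributes enough samples for the per-user Chernoff step to be tight uniformly over the $|G|$ users.

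The main obstacle is executing the uniform convergence step cleanly. The combined sample $\bigcup_{i \in H \cap T} S_i$ is stratified rather than i.i.d.\ from any mixture of truthful distributions---each $i$ contributes exactly $4M/|G|$ examples---so textbook PAC bounds do not apply directly, and the argument has to proceed per-user and be aggregated afterwards. Moreover, the set $H^*$ that $\Cand$ actually returns can depend on the samples in a complicated way, forcing the core claim to hold simultaneously over all $H \in \G$; this is the reason a $\ln |\G| = \Theta(|G|)$ term enters $M$, which in turn explains why $\Cand$ cannot be analyzed solely through a single-shot PAC argument on $H = T$.
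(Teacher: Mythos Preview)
Your high-level plan and the final counting are correct, and the per-$f$ estimate $(1-\eps/2)^{|B_H(f)|\cdot 4M/|G|}\le e^{-\Omega(M\eps)}$ is valid for any fixed hypothesis. The gap is the Sauer step. Sauer's lemma bounds the number of restrictions $f|_X$ to the realized sample $X$, but the event ``$|B_H(f)|>|H\cap T|/5$'' is \emph{not} a function of $f|_X$: two hypotheses that agree on every drawn point can have different $\err_{\D_i}$ and hence different $B_H(\cdot)$. Consequently you cannot collapse $\F$ to its labelings on $X$ and then union-bound your per-$f$ estimate over those equivalence classes; that would require the set of ``bad'' hypotheses to be determined by the sample, which it is not. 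This is exactly why the realizable VC bound needs a ghost-sample/symmetrization argument rather than Sauer plus a naive union bound over labelings. You flagged the difficulty yourself (``textbook PAC bounds do not apply directly''), but then invoked Sauer as if it resolved it; as written this step does not go through.

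The paper handles the stratified-sample issue by a different device. For each subset $H$ of truthful users it \emph{couples} $\bigcup_{i\in H}S_i$ to an i.i.d.\ sample of size $M$ from the mixture $\D_H$: draw an index sequence $A_H\in H^M$ uniformly and read samples from the per-user queues in that order. A balls-in-bins bound (this is the actual role of the $|G|\ln(|G|/\delta)$ term in $M$, not a ``per-user Chernoff'' tightening as you suggest) shows that with probability $1-\delta/2$ no index is requested more than $2M/|H'|\le 4M/|G|$ times, so the simulated i.i.d.\ sample is a \emph{subset} of the stratified one. Any $f$ consistent with the stratified sample is then consistent with the i.i.d.\ one, so the black-box VC theorem, union-bounded over the $\le 2^{|G|}$ choices of $H$, gives $\err_{\D_{H'}}(\hat f)\le\eps/10$ for $H'=H^*\cap T$, and a Markov step converts this mixture bound into the per-user statement. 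Your route can be completed either by adopting this coupling or by redoing the double-sample argument for stratified data, but some such reduction to genuine i.i.d.\ uniform convergence is the missing ingredient.
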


The proof of Lemma~\ref{lem:half} relies on the following technical claim, which enables us to relate the union of several equal-size datasets to the samples drawn from the uniform mixture of the corresponding distributions.

\begin{claim}\label{claim:bins}
	Suppose $m = \Omega\left(n \ln\frac{n}{\delta}\right)$ balls are thrown into $n$ bins independently and uniformly at random. Then with probability $1 - \delta$, no bins contain more than $\frac{2m}{n}$ balls.
\end{claim}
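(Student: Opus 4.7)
The plan is a routine Chernoff plus union bound. For each bin $j \in [n]$, let $X_j$ denote the number of balls landing in bin $j$. Then $X_j$ is a sum of $m$ independent Bernoulli$(1/n)$ indicators, so $\mathbb{E}[X_j] = m/n$. I would apply the multiplicative Chernoff bound in the form $\Pr[X_j \ge 2 \mathbb{E}[X_j]] \le \exp(-\mathbb{E}[X_j]/3) = \exp(-m/(3n))$.

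Next, I would choose the hidden constant in $m = \Omega(n \ln(n/\delta))$ large enough (say $m \ge 3n \ln(n/\delta)$) so that $\exp(-m/(3n)) \le \delta/n$. A union bound over the $n$ bins then yields
\[
\Pr\!\left[\exists j: X_j > \tfrac{2m}{n}\right] \le n \cdot \tfrac{\delta}{n} = \delta,
\]
which is exactly the claim.

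There is no real obstacle here; the only thing to be a bit careful about is stating the correct form of the Chernoff bound (with the factor of $1/3$ in the exponent for the deviation by a factor of $2$ above the mean), and ensuring the constant hidden in the $\Omega(\cdot)$ is chosen consistently so that the final failure probability is at most $\delta$ rather than some other constant multiple of it.
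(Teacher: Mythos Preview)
Your proposal is correct and essentially identical to the paper's proof: both fix a bin, apply a Chernoff bound to show the overflow probability is at most $\delta/n$ once the hidden constant in $m=\Omega(n\ln(n/\delta))$ is chosen large enough, and then union-bound over the $n$ bins. The only cosmetic difference is that the paper states the Chernoff bound in its KL-divergence form $e^{-mD(2/n\,\|\,1/n)}$ rather than the $e^{-\mathbb{E}[X_j]/3}$ form you use.
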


\begin{proof}[Proof of Claim~\ref{claim:bins}]
	Let random variable $X$ denote the number of balls in a fixed bin, so $\frac{X}{m}$ is the average of $m$ i.i.d. Bernoulli random variables with mean $\frac{1}{n}$. The Chernoff bound implies that \[\Pr\left[\frac{X}{m} \ge \frac{2}{n}\right] \le e^{-mD\left(\frac{2}{n}\right|\left|\frac{1}{n}\right)} = e^{-\Omega(n\ln\frac{n}{\delta})\cdot\Omega(\frac{1}{n})} \le \frac{\delta}{n},\]
	where the last step holds if we choose a sufficiently large hidden constant in $m = \Omega\left(n\ln\frac{n}{\delta}\right)$. The claim follows from a union bound over the $n$ bins.
\end{proof}

\begin{proof}[Proof of Lemma~\ref{lem:half}]
Let $G'$ denote the indices of truthful users in $G$. By assumption, $|G'| \ge \frac{9}{10}|G|$ and $\F$ contains a function $f^*$ that is consistent with $\bigcup_{i \in G'}S_i$. This guarantees that Algorithm~\ref{alg:cand} should return $\hat f_{H}$ as the output when $H = G'$, so function $\Cand$ is well-defined.

Recall that in Algorithm~\ref{alg:cand}, we set \[M = \Theta\left(\frac{d\ln(1/\eps) + \ln\left(2^{|G|}/\delta\right)}{\eps} + |G|\ln\frac{|G|}{\delta}\right).\] Consider the following thought experiment. For each non-empty $H \subseteq G$, we draw a sequence $A_H$ of $M$ integers, each of which is chosen uniformly and independently at random from $H$. We also draw $M$ samples from oracle $\O_i$ for each $i \in G$. If all users in $H$ are truthful, the samples together with $A_H$ naturally specify a realization of drawing $M$ samples from the uniform mixture distribution $\D_H\triangleq\frac{1}{|H|}\sum_{i\in H}\D_i$: we arrange the $M$ samples drawn from each distribution into a queue, and when we would like to draw the $i$-th sample, we simply take the sample at the front of queue $A_H(i)$.

For a fixed non-empty subset $H \subseteq G$ that only contains truthful users, the VC theorem implies that with probability $1 - \frac{\delta}{2^{|G| + 1}}$ (over the randomness in both the samples and the choice of $A_H$), when we draw samples from the uniform mixture $\D_H$ as described above, any function $f \in \F$ that is consistent with the labeled samples satisfies $\err_{\D_H}(f) \le \frac{\eps}{10}$. By a union bound over $\le 2^{|G|}$ different sets $H \subseteq G$, the above holds for \emph{every} $H \subseteq G$ simultaneously with probability $1 - 2^{|G|} \cdot \frac{\delta}{2^{|G| + 1}} = 1 - \frac{\delta}{2}$.

Recall that in Algorithm~\ref{alg:cand}, we first query each oracle $\O_i$ to obtain a ``training set'' $S_i$ of size $\frac{4M}{|G|}$ for each $i \in G$. Then we find set $H \subseteq G$ and classifier $\hat f_H \in \F$ such that: (1) $|H| \ge \frac{9}{10}|G|$; (2) $\hat f_H$ is consistent with all labeled samples in $\bigcup_{i\in H}S_i$. Suppose that $H$ is the set associated with the output of Algorithm~\ref{alg:cand}, and let $H' = \{i\in H: \O_i\text{ is truthful}\}$. Note that $|H'| \ge |H| - \frac{|G|}{10} \ge \frac{4}{5}|G|$.

The crucial observation is that since \[M = \Omega\left(|G|\ln\frac{|G|}{\delta}\right),\] Claim~\ref{claim:bins} implies that with probability at least $1 - \frac{\delta}{2}$, each index $i \in H'$ appears less than $\frac{2M}{|H'|} \le \frac{4M}{|G|}$ times in $A_{H'}$. In other words, $\bigcup_{i\in H'}S_i$ is a superset of the $M$ samples that are supposed to be drawn from $\D_{H'}$ (in our thought experiment). Since $\hat f_H$ is consistent with $\bigcup_{i\in H'}S_i$, a union bound shows that with probability $1 - 2\cdot\frac{\delta}{2} = 1 - \delta$, we have \[\frac{1}{|H'|}\sum_{i\in H'}\err_{\D_i}(\hat f_H) = \err_{\D_{H'}}(\hat f_H)\le \frac{\eps}{10}.\] This further implies that $\err_{\D_i}(\hat f_H) \le \frac{\eps}{2}$ holds for at least $\frac{|G|}{2}$ indices $i \in H'$; otherwise, we would have
\begin{align*}
	\frac{1}{|H'|}\sum_{i \in H'}\err_{\D_i}(\hat f_H) &\ge \frac{1}{|H'|}\left(|H'| - \frac{|G|}{2}\right) \cdot \frac{\eps}{2}\\
	&\ge \left(1 - \frac{5}{8}\right) \cdot \frac{\eps}{2} > \frac{\eps}{10},
\end{align*}
which leads to a contradiction. Here the second step applies $|H'| \ge \frac{4}{5}|G|$. This proves the lemma.
\end{proof}

The following lemma, which directly follows from a Chernoff bound and a union bound, states that with probability $1 - \delta$, $\Test(G, \hat f, \eps, \delta)$ correctly determines whether $\hat f$ has an $O(\eps)$ error for each user in $G$.
\begin{lemma}\label{lem:test}
	Let $G'$ denote the output of $\Test(G, \hat f, \eps, \delta)$.
	With probability $1 - \delta$, the following holds for every $i \in G$ simultaneously: (1) if $\err_{\D_i}(\hat f) > \eps$, $i \in G'$; (2) if $\err_{\D_i}(\hat f) \le \frac{\eps}{2}$, $i \notin G'$.
\end{lemma}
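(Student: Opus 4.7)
The plan is to establish the statement one truthful user at a time by a Chernoff bound, and then take a union bound. The claim is nontrivial only for truthful users (samples from adversarial $\O_i$ have no defined distribution, so there is nothing to control), so I fix a truthful $i \in G$. For such $i$, the samples $S_i$ returned by $\O_i$ are i.i.d.\ from $\D_i$ with true labels, so the indicator $\Ind{\hat f(x) \ne y}$ on each sample is a Bernoulli random variable with mean $p_i := \err_{\D_i}(\hat f)$, and $\theta_i$ is the average of $|S_i|$ such independent Bernoullis.

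First I would handle the lower-tail case needed for conclusion (1). When $p_i > \eps$, the threshold $3\eps/4$ lies below the mean $p_i$ by a multiplicative factor of at least $1/4$, so the multiplicative Chernoff bound gives
\[
\Pr\left[\theta_i \le \tfrac{3}{4}\eps\right] \le \exp\bigl(-\Omega(p_i |S_i|)\bigr) \le \exp\bigl(-\Omega(\eps |S_i|)\bigr).
\]
Next I would handle the upper-tail case needed for conclusion (2). When $p_i \le \eps/2$, the threshold $3\eps/4$ exceeds $p_i$ by a multiplicative factor of at least $1/2$, and the Chernoff upper tail yields
\[
\Pr\left[\theta_i > \tfrac{3}{4}\eps\right] \le \exp\bigl(-\Omega(\eps |S_i|)\bigr).
\]
With $|S_i| = \Theta(\ln(|G|/\delta)/\eps)$ and a sufficiently large hidden constant, both per-user failure probabilities are at most $\delta/|G|$. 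A union bound over the at most $|G|$ truthful indices in $G$ then gives overall failure probability at most $\delta$, and both implications of the lemma hold simultaneously for every truthful $i$.

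The only mild obstacle is the upper tail when $p_i$ is close to zero: the familiar relative Chernoff bound $\exp(-\mu\gamma^2/3)$ degrades because $\mu = p_i|S_i|$ can be arbitrarily small. The fix is to apply the multiplicative Chernoff bound in its large-$\gamma$ form $\Pr[X \ge (1+\gamma)\mu] \le \exp(-\mu\gamma/3)$ when $\gamma \ge 1$, so that the exponent becomes $\Omega((\tfrac{3}{4}\eps - p_i)|S_i|) = \Omega(\eps|S_i|)$ uniformly in $p_i$. This matches the sample bound and closes the argument.
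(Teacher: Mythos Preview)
Your proposal is correct and follows essentially the same argument as the paper: fix a truthful $i$, observe $\theta_i$ is an average of $|S_i|$ i.i.d.\ Bernoullis with mean $\err_{\D_i}(\hat f)$, apply the Chernoff bound to get per-user failure probability at most $\delta/|G|$, then union bound over $G$. Your explicit handling of the small-$p_i$ upper tail via the large-$\gamma$ Chernoff form is a nice touch that the paper leaves implicit.
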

\begin{proof}[Proof of Lemma~\ref{lem:test}]
	Fix a truthful oracle $\O_i$ with $i \in G$. Recall that Algorithm~\ref{alg:test} sets \[\theta_i = \frac{1}{|S_i|}\sum_{(x, y) \in S_i}\Ind{\hat f(x) \ne y}.\] Note that $\theta_i$ is the average of $\Omega\left(\frac{\ln(|G| / \delta)}{\eps}\right)$ independent Bernoulli random variables, each with mean $\err_{\D_i}(\hat f)$. Thus, the Chernoff bound implies that with probability $1 - \frac{\delta}{|G|}$, the following two conditions holds simultaneously: (1) if $\err_{\D_i}(\hat f) > \eps$, $\theta_i > \frac{3}{4}\eps$; (2) if $\err_{\D_i}(\hat f) \le \frac{\eps}{2}$, $\theta_i \le \frac{3}{4}\eps$. The lemma follows from a union bound over all $i \in G$.
\end{proof}

\subsection{Correctness and Sample Complexity}
Now we are ready to prove our main result.

\begin{theorem}\label{thm:upper}
For any $\eps, \delta \in (0, 1]$ and $\eta \in [0, 1]$, Algorithm~\ref{alg:main} is an $(\eps, \delta, \eta)$-learning algorithm and takes at most  \[O\left(\frac{d\ln(1/\eps)}{\eps}(\eta n + \ln n) + \frac{n\ln(n / \delta)}{\eps}\right)\] samples.
\end{theorem}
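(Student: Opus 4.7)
The plan is to combine the two subroutine lemmas in a geometric-halving argument for the while loop, then append the standard realizable PAC guarantee for the final for-loop. First, for each round $r \ge 1$, let $E_r$ denote the joint success of $\Cand$ (Lemma~\ref{lem:half}) and $\Test$ (Lemma~\ref{lem:test}) at that round, each failing with probability at most $\delta_r = \delta/(5r^2)$; let $E_{\text{final}}$ denote the event that standard PAC learning on the $\Theta((d\ln(1/\eps) + \ln(n/\delta))/\eps)$ samples succeeds simultaneously for every remaining user. Summing $\sum_{r\ge 1} 2\delta_r$ with $\pi^2/6$ slack, then reserving a constant fraction of $\delta$ for $E_{\text{final}}$ via a $\delta/n$-per-user union bound, yields total failure probability at most $\delta$.

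Under these events, I would establish correctness as follows. In round $r$, Lemma~\ref{lem:half} produces $\ge |G_r|/2$ indices that are simultaneously truthful and satisfy $\err_{\D_i}(\hat f_r) \le \eps/2$; Lemma~\ref{lem:test} then removes every such index into $G_r \setminus G_{r+1}$, so $|G_{r+1}| \le |G_r|/2$. In particular, the loop terminates after $R \le \log_2 n$ rounds. For any index removed in round $r$, Lemma~\ref{lem:test} (contrapositive of condition~(1)) forces $\err_{\D_i}(\hat f_r) \le \eps$, so the assigned $f_i = \hat f_r$ is $\eps$-accurate. The for-loop then handles the $|G_R| < 10\lfloor\eta n\rfloor$ survivors via standard PAC learning on samples that, for every truthful $\O_i$, are i.i.d.\ from $\D_i$.

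For the sample complexity I would sum three contributions. Each $\Cand$ call draws $4M_r = O(d\ln(1/\eps)/\eps + |G_r|/\eps + |G_r|\ln(|G_r|/\delta_r))$ samples after using $\ln(2^{|G_r|}/\delta_r) = O(|G_r| + \ln(r/\delta))$; geometric summation over $r = 1, \ldots, R$ (since $|G_r|$ halves from $n$) yields $O(d\ln(1/\eps)\ln n /\eps + n\ln(n/\delta)/\eps)$. Each $\Test$ call draws $O(|G_r|\ln(|G_r|/\delta_r)/\eps)$ samples, again collapsing geometrically to $O(n\ln(n/\delta)/\eps)$. Finally, the for-loop spends $O(\eta n \cdot (d\ln(1/\eps) + \ln(n/\delta))/\eps)$ samples. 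Adding these yields precisely the stated bound $O\bigl(\frac{d\ln(1/\eps)}{\eps}(\eta n + \ln n) + \frac{n\ln(n/\delta)}{\eps}\bigr)$.

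The main obstacle is bookkeeping rather than conceptual: verifying that the $\ln(2^{|G_r|}/\delta_r)$ factor inside $M_r$ really collapses to $O(|G_r|)$ plus mild $\delta$-dependence, and that the per-round failure budget $\delta_r = \delta/(5r^2)$ absorbs all the $\log(1/\delta_r) = O(\log r + \log(1/\delta))$ slack without inflating the final $\ln(n/\delta)$ dependence. Once those are checked, the geometric halving of $|G_r|$ supplied by Lemmas~\ref{lem:half} and~\ref{lem:test} drives the rest of the argument.
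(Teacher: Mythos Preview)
Your proposal is correct and follows essentially the same approach as the paper: iterative halving via Lemmas~\ref{lem:half} and~\ref{lem:test} with failure budgets $\delta_r = \delta/(5r^2)$, geometric summation of the per-round $\Cand$/$\Test$ costs, and a final standard PAC step on the $O(\eta n)$ survivors. The bookkeeping concerns you flag (absorbing $\ln(2^{|G_r|}/\delta_r)$ into $O(|G_r| + \ln(r/\delta))$ and controlling the $\ln(1/\delta_r)$ slack) are exactly the calculations the paper carries out, and they go through as you outline.
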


By Theorem~\ref{thm:upper}, the sample complexity $m_{n,d}(\eps,\delta,\eta)$ reduces to $O\left(d(\eta n + \ln n)\right)$ when $\eps$ and $\delta$ are constants and $n \le C \cdot d$ for some constant $C$. Therefore, when $n = O(d)$, we have the following overhead upper bound: \[o(n,d,\eta) = \frac{O\left(d(\eta n + \ln n)\right)}{\Theta(d)} = O(\eta n + \ln n).\]

\begin{proof}[Proof of Theorem~\ref{thm:upper}]
The proof proceeds by applying Lemmas \ref{lem:half}~and~\ref{lem:test} iteratively. In each round $r$, Lemma~\ref{lem:half} guarantees that with probability $1 - \delta_r$, the learned classifier $\hat f_r$ has an error below $\frac{\eps}{2}$ for at least $\frac{|G_r|}{2}$ truthful users. By Lemma~\ref{lem:test}, for each such distribution, the ``validation error'' $\theta_i$ should be below $\frac{3}{4}\eps$, so these users will exit the algorithm by receiving $\hat f_r$ as the classifier, and the number of active users decreases by a factor of $\frac{1}{2}$. Therefore, the while-loop in Algorithm~\ref{alg:main} terminates after at most $\lfloor\log_2n\rfloor + 1$ iterations. Finally, the algorithm satisfies the remaining active users by drawing $\Theta\left(\frac{d\ln(1/\eps)+\ln(n/\delta)}{\eps}\right)$ samples from each of them. Thus, the VC theorem guarantees that for each truthful user, the learned classifier is $\eps$-accurate with probability at least $1 - \frac{\delta}{3n}$. By a union bound, with probability at least \[1 - \sum_{r=1}^{\infty}2\delta_r - n\cdot\frac{\delta}{3n} = 1 - \left(\frac{1}{3} + \sum_{r=1}^{\infty}\frac{2}{5r^2}\right)\delta \ge 1 - \delta,\] Algorithm~\ref{alg:main} returns an $\eps$-accurate classifier for each truthful user.

It remains to bound the sample complexity of Algorithm~\ref{alg:main}. In round $r$, the number of active users is at most $|G_r| \le \frac{n}{2^{r-1}}$. Recall that $\delta_r = \frac{\delta}{5r^2}$. The number of samples that $\Cand$ and $\Test$ draw in round $r$ is then upper bounded by
\begin{align*}
	&O\left(\frac{d \ln(1 / \eps)+ |G_r|\ln(|G_r| / \delta_r)}{\eps}\right)\\
=	&O\left(\frac{d \ln(1 / \eps)}{\eps} + \frac{n\ln(n / \delta)}{2^r\eps}\right).
\end{align*}
Therefore, the number of samples drawn in the $O(\ln n)$ iterations is upper bounded by:
\begin{equation}\label{eq:upper-1}\begin{split}
	&\sum_{r = 0}^{\lfloor\log_2 n\rfloor+1}O\left(\frac{d \ln(1 / \eps)}{\eps} + \frac{n \ln(n / \delta)}{2^r\eps}\right)\\
=	&O\left(\frac{d\ln(1 / \eps)\ln n+n \ln(n / \delta)}{\eps}\right).
\end{split}\end{equation}

When the while-loop in Algorithm~\ref{alg:main} terminates, it holds that $|G_r| \le 10\eta n = O(\eta n)$. After that, we learn on the remaining distributions separately, using
\begin{equation}\label{eq:upper-2}
O\left(\eta n \cdot \frac{d\ln(1/\eps)+\ln(n/\delta)}{\eps}\right)
\end{equation}
samples in total. Adding \eqref{eq:upper-1} and \eqref{eq:upper-2} gives the desired sample complexity upper bound.
\end{proof}

\section{Overhead Lower Bound}\label{sec:lower}
In this section, we show that an $\Omega(\eta n+\ln n)$ overhead is unavoidable when $n = \Theta(d)$. Therefore, the overhead achieved by Algorithm~\ref{alg:main} is optimal up to a constant factor, when the number of users is commensurate with the complexity of the hypothesis class. Formally, we have the following theorem.

\begin{theorem}\label{thm:lower}
For any $n, d \in \N$, $\eps \in \left(0, \frac{1}{2}\right]$ and $\delta, \eta \in (0, 1)$, \[m_{n, d}(\eps, \delta, \eta) = \Omega\left(\frac{\eta nd}{\eps}\right).\]
\end{theorem}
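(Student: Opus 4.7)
The plan is to reduce from the classical PAC lower bound: any algorithm that learns a concept class of VC-dimension $d$ on a single distribution to accuracy $\eps$ requires $\Omega(d/\eps)$ samples. I would use the textbook witness: a distribution $\D_{0}$ placing mass $1-\eps$ on a dummy point and mass $\eps/d$ on each of $d$ shattered points, together with $f^{*}$ chosen uniformly from the set of labelings that $\F$ realizes on those shattered points. The aim is to ``embed'' $\Theta(\eta n)$ near-independent copies of this hard PAC instance into the RCL setting, each costing $\Omega(d/\eps)$ samples.

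Concretely, let $k = \lfloor \eta n \rfloor$. First I would designate $2k \le n$ of the $n$ users as $k$ pairs $(a_{j},b_{j})$, assigning each pair a distribution $\D^{(j)}$ modeled on $\D_{0}$; the remaining $n-2k$ users are given degenerate distributions supported on a ``trivial'' point where every hypothesis in $\F$ agrees, so that no useful information can be gleaned from them. Within pair $j$, one of $\{a_{j},b_{j}\}$ is truthful (sampling from $\D^{(j)}$ and labeling according to $f^{*}$) while the other is adversarial, mimicking a truthful user for $(\D^{(j)},g^{(j)})$ for some $g^{(j)}\in\F$ that is $\eps$-far from $f^{*}$ on $\D^{(j)}$. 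The adversary uses its knowledge of $f^{*}$ and of the truthful samples so that the two sub-scenarios ``$a_{j}$ truthful'' and ``$b_{j}$ truthful'' are statistically indistinguishable from the algorithm's viewpoint.

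The indistinguishability step then forces the algorithm's output to be valid under either sub-scenario, so $f_{a_{j}}$ must be $\eps$-close on $\D^{(j)}$ to the labels $\O_{a_{j}}$ produces, and $f_{b_{j}}$ to those of $\O_{b_{j}}$. Each of these is itself a PAC-learning problem on $\D^{(j)}$ with hypothesis class of restricted VC-dimension $d$, so by the classical lower bound each pair consumes $\Omega(d/\eps)$ samples from its two oracles. I would then argue that the pair distributions are chosen so that samples drawn from pair $j$ provide negligible information about pair $j'$'s labels (for $j \neq j'$), making the per-pair costs additive. Summing over the $k = \Theta(\eta n)$ pairs gives the desired $\Omega(\eta n \cdot d/\eps)$ lower bound.

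The main obstacle is the tension between requiring $\Theta(\eta n)$ essentially-independent PAC subproblems of hardness $\Omega(d/\eps)$ and constraining $\F$ to global VC-dimension $d$: Sauer–Shelah forbids $\Theta(\eta n)$ disjoint shattered $d$-sets for $\eta n \gg \log n$, so the hard instances cannot be made fully independent in the obvious way. The construction must therefore be more delicate---either using overlapping supports with a structured $\F$ whose restrictions to each $\D^{(j)}$ retain the $\Omega(d/\eps)$ PAC hardness \emph{conditional on} what can be learned from the other pairs, or a Yao-style randomization over $f^{*}$ and adversary strategies under which the expected information gain per sample is too small to amortize across pairs. Getting this coupling correct, so that the per-pair $\Omega(d/\eps)$ bound genuinely survives in the aggregate, is the delicate heart of the argument.
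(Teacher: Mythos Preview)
Your reduction to the single-distribution PAC lower bound is the right starting point, but the pair construction you propose creates exactly the obstacle you flag at the end, and you have not resolved it. The tension is real and not merely technical: if each pair is given its own shattered $d$-set, the restrictions of $\F$ to those sets cannot be independent once $\eta n$ exceeds a constant (this is the Sauer--Shelah obstruction you note); if instead the pairs share a common support, then your $k$ truthful users---all labeling by the same $f^{*}$---become redundant, and samples from any one of them help learn $f^{*}$ for all of them, collapsing the intended per-pair cost. The ``delicate coupling'' you gesture at does not obviously exist, and the proposal as written is incomplete.

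The paper's proof sidesteps this by a simpler asymmetry: there is only \emph{one} truthful user with an informative distribution (the remaining $(1-\eta)n - 1$ truthful users sit on a dummy point and reveal nothing). The $\eta n$ adversaries all adopt the \emph{same} distribution as that single truthful user, but each labels according to its own target $\tilde f_{i}$ drawn independently and uniformly from $\F$. Since $f^{*}$ is itself uniform over $\F$, the $\eta n + 1$ informative users are exchangeable from the algorithm's viewpoint, so an $(\eps,\delta,\eta)$-learner must return an $\eps$-accurate classifier for each of their targets. Those targets are mutually independent, hence samples from oracle $j$ carry no information about $\tilde f_{i}$ for $i \ne j$, and each of the $\eta n + 1$ users individually demands $\Omega(d/\eps)$ of its own samples. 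No VC-dimension conflict arises because all $\eta n + 1$ users share the same $d$ shattered points; the independence that makes the costs additive lives in the random \emph{targets}, not in the supports. In short, you were trying to build $\Theta(\eta n)$ hard instances out of truthful users constrained to a common $f^{*}$; the paper builds them out of adversaries, who face no such constraint.
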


Theorem~\ref{thm:lower} directly implies the following lower bound on the overhead: \[o(n, d, \eta) = \frac{\Omega(\eta n d)}{\Theta(d)} = \Omega(\eta n).\] Combining this with the previous lower bound $o(n, d, \eta) = \Omega(\ln n)$ when $n = \Theta(d)$ and $\eta = 0$~\cite{blum2017collaborative}\footnote{They proved an $\Omega(\ln n)$ lower bound for the special case that $n = d$, yet their proof directly implies the same lower bound when $n = \Theta(d)$.}, we obtain the desired worst-case lower bound of $\Omega(\eta n+\ln n)$.

\begin{proof}[Proof of Theorem~\ref{thm:lower}]
Assume without loss of generality that $\eta n$ is an integer between $1$ and $n - 1$. We consider the binary classification problem on set $\X = [d]\cup\{\bot\}$, while the hypothesis class $\F$ contains all the $2^d$ binary functions on $\X$ that map $\bot$ to $0$. The target function $f^*$ is chosen uniformly at random from $\F$.

Suppose that for $(1-\eta)n-1$ truthful users, the data distribution is the degenerate distribution on $\{\bot\}$, so these truthful users provide no information on the correct classifier $f^*$. On the other hand, the data distribution of the only remaining truthful user $i^*$ satisfies $\D_{i^*}(x) = \frac{2\eps}{d}$ for any $x \in [d]$ and $\D_{i^*}(\bot) = 1 - 2\eps$. By construction, a learning algorithm must draw $\Omega\left(\frac{d}{\eps}\right)$ samples from $\D_{i^*}$ in order to learn an $\eps$-accurate classifier with a non-trivial success probability $1 - \delta$.

Now suppose that each of the $\eta n$ adversarial users tries to pretend that he is the truthful user $i^*$. More specifically, each adversarial user $i$ chooses a function $\tilde f_i \in \F$ uniformly at random, and answer the queries as if he is the truthful user with a different target function $\tilde f_i$. In other words, upon each request from the learning algorithm, oracle $\O_i$ draws $x$ from $\D_{i^*}$ and returns $\left(x, \tilde f_i(x)\right)$.

Recall that the actual target function $f^*$ is also uniformly distributed in $\F$, so from the perspective of the learning algorithm, the truthful user $i^*$ is indistinguishable from the other $\eta n$ adversarial users. Therefore, an $(\eps, \delta, \eta)$-learning algorithm must guarantee that each of these $(\eta n + 1)$ users receives an $\eps$-accurate classifier with probability at least $1 - \delta$. The sample complexity lower bound from PAC learning theory implies that we must draw $\Omega\left(\frac{d}{\eps}\right)$ samples from each such user and thus \[(\eta n + 1)\cdot\Omega\left(\frac{d}{\eps}\right) = \Omega\left(\frac{\eta nd}{\eps}\right)\] samples in total.
\end{proof}
\section{Discussion: A Computationally Efficient Algorithm?}\label{sec:discussion}
Although Algorithm~\ref{alg:main} is proved to achieve an optimal sample complexity overhead in certain cases, the algorithm is computationally inefficient and of limited practical use when there are a large number of users. In particular, subroutine $\Cand$ performs an exhaustive search over all user subsets of size $\ge \frac{9}{10}|G|$, and thus may potentially call oracle $\O_{\F}$ exponentially many times. In contrast, the na\"ive approach that learns for different users separately, though obtaining an $\Omega(n)$ overhead, only makes $n$ calls to oracle $\O_{\F}$. Naturally, one may wonder whether we can achieve the best of both worlds by finding a computationally efficient learning algorithm with a small overhead? We conjecture that such an algorithm, unfortunately, does not exist.

\begin{conjecture}\label{conj:tradeoff}
	For any $\alpha > -1$ and $\beta < 1$, no learning algorithms that make polynomially many calls to oracle $\O_{\F}$ achieve an $O(n^{\beta})$ overhead when $\eta = \Omega(n^{\alpha})$.
\end{conjecture}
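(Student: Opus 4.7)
\textbf{Proof plan for Conjecture~\ref{conj:tradeoff}.} Since the statement is an average-case computational hardness conjecture rather than an information-theoretic one, the plan is not to prove it unconditionally but to reduce from a well-studied hard problem. The overall strategy mirrors recent hardness results in robust statistics: exhibit a learning task in which the only information-theoretic benefit of sharing samples across users is to detect a ``planted'' subset of compatible users, and argue that any efficient algorithm that realizes this benefit must solve the planted-detection subproblem efficiently.

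First I would fix the base hard problem. The most natural candidates are (i) a planted-clique-style problem, or (ii) a list-decodable learning primitive such as list-decodable mean estimation, for which lower bounds against the Statistical Query model are already known. Both are attractive because Algorithm~\ref{alg:cand} can be viewed as an exhaustive search for a large subset $H \subseteq G$ on which $\F$ is realizable, which is precisely the combinatorial object that the hard problem plants. I would then design a hypothesis class $\F$ of VC-dimension $d = \Theta(n)$ together with distributions $\D_1, \ldots, \D_n$ and adversarial responses so that: (a) each individual distribution $\D_i$ is $\eps$-learnable from $\Theta(d/\eps)$ i.i.d.\ samples via a polynomial-time $\O_{\F}$; (b) every truthful user admits the same $f^* \in \F$, while every adversarial user's samples are consistent with an independently chosen planted function; and (c) identifying even a constant fraction of the truthful users without paying $\Omega(d/\eps)$ samples per user is equivalent to the planted-detection subproblem.

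Given this encoding, the central step is the reduction itself. Suppose $\A$ makes $\mathrm{poly}(n)$ calls to $\O_{\F}$ and attains overhead $O(n^{\beta})$ with $\beta < 1$ at adversarial fraction $\eta = \Omega(n^{\alpha})$. By Theorem~\ref{thm:lower}-style arguments, $\A$ must draw $o(d/\eps)$ samples from a typical user, so it cannot succeed for each user in isolation; it must transfer information across users. Any such transfer certifies, for some classifier $\hat f$ it emits, that $\hat f$ is accurate on an $\Omega(1)$ fraction of the users---precisely the output required by the planted-detection problem. Combining this with efficient per-user verification via $\Test$ (Algorithm~\ref{alg:test}) would yield a polynomial-time algorithm for the hard problem, contradicting the assumed hardness.

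The main obstacle will be handling the full parameter range $\alpha > -1$, $\beta < 1$ with a single construction. Known average-case hardness results typically give a single signal-to-noise threshold, whereas Conjecture~\ref{conj:tradeoff} demands a smooth tradeoff; I anticipate needing a family of reductions parameterized by $\alpha$, possibly combining a planted subgraph primitive with an amplification gadget that scales the effective adversarial fraction. A secondary but real difficulty is ensuring that the consistency oracle $\O_{\F}$ for the constructed $\F$ is itself polynomial-time---otherwise the hardness could be (uninterestingly) attributed to $\O_{\F}$ rather than to the learning task---which restricts $\F$ to concept classes like halfspaces, parities, or decision lists for which efficient ERM is well understood.
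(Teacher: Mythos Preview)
The statement is a \emph{conjecture} that the paper explicitly leaves open; there is no proof in the paper to compare against. Section~\ref{sec:discussion} offers only a heuristic barrier for one specific algorithmic template: algorithms that, like Algorithm~\ref{alg:main}, proceed by locating a large subset $H$ of users whose data are mutually consistent must solve a clique-approximation problem, and Zuckerman's inapproximability result then limits such algorithms to serving $O(n^{1-\beta})$ users per round, forcing $\Omega(n^{\beta})$ rounds. The paper presents this as supporting evidence, not as a proof, and says so directly.

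Your plan is substantially more ambitious than anything the paper attempts---you aim for a genuine reduction showing that \emph{every} polynomial-time learner with $O(n^{\beta})$ overhead solves a planted hard problem. That is a reasonable research direction, but the central inferential step has a real gap. From ``$\A$ draws $o(d/\eps)$ samples from a typical user, so it must transfer information across users'' you jump to ``any such transfer certifies, for some emitted $\hat f$, that $\hat f$ is accurate on an $\Omega(1)$ fraction of the users.'' This does not follow. A sample-efficient learner could aggregate information across users without ever committing to a single classifier that is simultaneously good for a constant fraction of them---it outputs a tuple $(f_1,\ldots,f_n)$, and nothing forces any $f_i$ to be shared or any large subset of truthful users to be identifiable from that tuple. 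Extracting a planted-subset certificate from the algorithm's behavior is exactly the missing piece the reduction must supply; as written, your argument only rules out algorithms that already resemble $\Cand$, which is precisely the class the paper's informal barrier already covers. The obstacles you do flag (covering the full $(\alpha,\beta)$ range, keeping $\O_{\F}$ efficient) are real but secondary to this one.
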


In words, when there is a non-trivial number of adversaries, any efficient learning algorithm would incur a nearly-linear overhead. We remark that it is necessary to assume $\alpha > -1$ since when $\eta n$, the maximum possible number of adversaries, is a constant, the learning algorithm can enumerate the subset of adversarial users in polynomial time, thus achieving an optimal overhead efficiently. Proving or refuting Conjecture~\ref{conj:tradeoff} would greatly further our understanding of the impact of arbitrary outliers on collaborative learning.

The key to our sample-efficient learning algorithm is that subroutine $\Cand$ identifies a large user group such that some classifier $\hat f \in \F$ is consistent with all their labeled samples. Lemma~\ref{lem:half} further guarantees that $\hat f$ is $\eps$-accurate for at least half of the users. This allows us to satisfy almost all the users in $O(\ln n)$ iterations, resulting in the $\ln n$ term in the overhead.

We note that finding a group of users with consistent datasets generalizes the problem of finding a large clique in a graph: For an undirected graph with vertices labeled from $1$ to $n$, we construct the user oracles $\O_1, \O_2, \ldots, \O_n$ such that $\O_i$ and $\O_j$ produce conflicting labels on the same data if the edge $(i, j)$ is absent from the graph. Then a group of users have consistent datasets if and only if they form a clique in the corresponding graph.

Unfortunately, \citet{zuckerman2006linear} proved that even if the graph is known to contain a hidden clique of size $\Omega(n)$\footnote{Analogously, in our setting, we know that a large fraction of users have non-conflicting datasets.}, it is still NP-hard to find a clique of size $\Omega(n^{1 - \beta})$ for any $\beta < 1$. This indicates that, following the approach of Algorithm~\ref{alg:main}, a computationally efficient algorithm can only find accurate classifiers for at most $O(n^{1 - \beta})$ users in each iteration. As a result, $\Omega(n^{\beta})$ iterations would be necessary to satisfy all the $n$ users. The algorithm consequently incurs an $\Omega(n^{\beta})$ overhead.

\bibliographystyle{plainnat}
\bibliography{main}

\end{document}